\newcommand{\mb}[1]{\mathbf{#1}}
\newcommand{\bs}[1]{\boldsymbol{#1}}
\newtheorem{theorem}{Theorem}
\title{Sylvester Normalizing Flows for Variational Inference}
\author{ {\bf Rianne van den Berg\thanks{\hspace{2mm}Equal contribution.}}\\
University of Amsterdam \\
\And
{\bf Leonard Hasenclever$^{*}$}\\
University of Oxford\\
\And
{\bf Jakub M. Tomczak}\\
University of Amsterdam\\
\And
{\bf Max Welling}\\
University of Amsterdam\\
}
\begin{document}

\maketitle

\begin{abstract}
Variational inference relies on flexible approximate posterior distributions. Normalizing flows provide a general recipe to construct flexible variational posteriors. We introduce Sylvester normalizing flows, which can be seen as a generalization of planar flows. Sylvester normalizing flows remove the well-known single-unit bottleneck from planar flows, making a single transformation much more flexible. We compare the performance of Sylvester normalizing flows against planar flows and inverse autoregressive flows and demonstrate that they compare favorably on several datasets. 
The code of our model is publicly available at \url{https://github.com/riannevdberg/sylvester-flows}.
\end{abstract}

\section{INTRODUCTION}

Stochastic variational inference \citep{Hoffman13a} allows for posterior inference in increasingly large and complex problems using stochastic gradient ascent. In continuous latent variable models, variational inference can be made particularly efficient through the amortized inference, in which inference networks amortize the cost of calculating the variational posterior for a data point \citep{GG:14}. A particularly successful class of models is the variational autoencoder (VAE) in which both the generative model and the inference network are given by neural networks, and sampling from the variational posterior is efficient through the non-centered parameterization \citep{KW:14}, also known as the reparameterization trick \citep{KW:13,RMW:14}.

Despite its success, variational inference has drawbacks compared to other inference methods such as MCMC. Variational inference searches for the best posterior approximation within a parametric family of distributions. Hence, the true posterior distribution can only be recovered exactly if it happens to be in the chosen family. In particular, with widely used simple variational families such as diagonal covariance Gaussian distributions, the variational approximation is likely to be insufficient. More complex variational families enable better posterior approximations, resulting in improved model performance. Therefore, designing tractable and more expressive variational families is an important problem in variational inference \citep{NHS:16, SKW:15, TRB:15}.

\cite{RezMoh2015} introduced a general framework for constructing more flexible variational distributions, called normalizing flows. Normalizing flows transform a base density through a number of invertible parametric transformations with tractable Jacobians into more complicated distributions. They proposed two classes of normalizing flows: planar flows and radial flows. While effective for small problems, these can be hard to train and often many transformations are required to get good performance. For planar flows, \cite{KinSalJoz2016} argue that this is due to the fact that the transformation used acts as a bottleneck, warping one direction at a time. Having a large number of flows makes the inference network very deep and harder to train, empirically resulting in suboptimal performance. \cite{KinSalJoz2016} proposed inverse auto-regressive flows (IAF), achieving state of the art results on dynamically binarized MNIST at the time of publication. 
While very successful, each transformation in IAF only depends on the datapoint $\mb x$ through a context vector, with flow parameters that are independent of the datapoint. 

\textbf{Paper contribution} In this paper, we use Sylvester's determinant identity to introduce Sylvester normalizing flows (SNFs). This family of flows is a generalization of planar flows, removing the bottleneck. We compare a number of different variants of SNFs and show that they compare favorably against planar flows and IAFs. We show that one specific variant of SNF is related to IAF, with the main difference being the amortization strategy of the flow parameters. Besides the usual requirement of having flexible transformations, this demonstrates the importance of having data-dependent flow parameters. Note that this concept generalizes to applying normalizing flows to any conditional distribution, in the sense that the transformation parameters should be functions of the conditioning variable. 
%The implementation of our model is publicly available at \url{https://github.com/riannevdberg/sylvester-flows}.

\section{VARIATIONAL INFERENCE}
Consider a probabilistic model with observations $\mb{x}$ and continuous latent variables $\mb{z}$ and model parameters $\theta$. In generative modeling we are often interested in performing maximum (marginal) likelihood learning of the parameters $\mb{\theta}$ of the latent-variable model $p_\mb{\theta}(\mb{x}, \mb{z})$. This requires marginalization over the unobserved latent variables $\mb{z}$. Unfortunately, this integration is generally intractable. Variational inference \citep{Jordan1999} instead introduces a variational approximation $q(\mb{z}|\mb{x})$ to the posterior, to construct a lower bound on the log marginal likelihood:
\begin{align}
\log p_{\theta}(\mb{x}) &\geq \log p_{\theta}(\mb{x}) - \text{KL}(q(\mb{z}|\mb{x}) \,||\, p_{\theta}(\mb{z}|\mb{x}))\label{eq:kl}\\
&= \mathbb{E}_{q}[\log p_\theta(\mb{x}|\mb{z})] - \text{KL}(q(\mb{z}|\mb{x}) \,||\, p(\mb{z})) \label{eq:elbo}\\
&=: -\mathcal{F}(\theta)
\end{align}
This bound is known as the evidence lower bound (ELBO) and $\mathcal{F}$ is referred to as the variational free energy. In equation (\ref{eq:elbo}), the first term represents the reconstruction error, and the second term is the Kullback-Leibler (KL) divergence from the approximate posterior to the prior distribution, which acts as a regularizer. In this paper we consider variational autoencoders (VAEs), where both $p_\theta(\mb{x}|\mb{z})$ and $q(\mb{z}|\mb{x})$ are distributions whose parameters are given by neural networks. That is, we perform amortized inference such that $q(\mb z|\mb x) = q_\phi(\mb z|\mb x)$ with $\phi$ the parameters of the encoder neural network. The parameters $\theta$ and $\phi$  of the generative model and inference model, respectively, are trained jointly through stochastic minimization of $\mathcal{F}(\theta, \phi)$, which can be made efficient through the reparameterization trick \citep{KW:13,RMW:14}.

From equation \eqref{eq:kl} we see that the better the variational approximation to the posterior the tighter the ELBO. 
The simplest, but probably most widely used choice of variation distribution $q_\phi(\mb{z}|\mb{x})$ is diagonal-covariance Gaussians of the form $\mathcal{N}(\mb z|\pmb{\mu}_{\phi}(\mb{x}), ~\pmb{\sigma}_{\phi}^2(\mb{x}))$

However, with such simple variational distributions the ELBO will be fairly loose, resulting in biased maximum likelihood estimates of the model parameters $\mb{\theta}$ (see Fig. \ref{fig:bias}) and harming generative performance. Thus, for variational inference to work well, more flexible approximate posterior distributions are needed.
\begin{figure}
\centering
\includegraphics[width=0.9\linewidth]{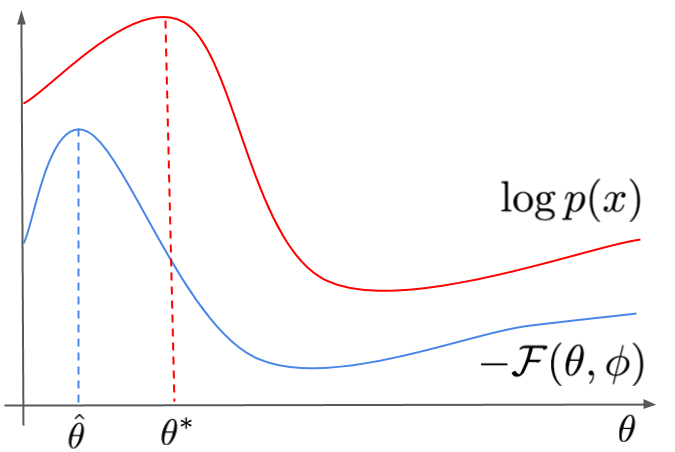}
\caption{Since the ELBO is only a lower bound on the log marginal likelihood, they do not share the same local maxima. The looser the ELBO is the more this can bias maximum likelihood estimates of the model parameters.}
\vspace{-0.4cm}
\label{fig:bias}
\end{figure}

\subsection{NORMALIZING FLOWS} 
\cite{RezMoh2015} propose a way to construct more flexible posteriors by transforming a simple base distribution with a series of invertible transformations (known as normalizing flows) with easily computable Jacobians. The resulting transformed density after one such transformation $f$ is as follows \citep{TT:13,TV:10}:
\begin{align}
p_{1}(\mb{z'}) = p_{0}(\mb{z}) \left| \det \left( \frac{\partial f(\mb{z})}{\partial \mb{z}} \right) \right|^{-1} ,
\end{align}
where $\mb{z'} = f(\mb{z})$, $\mb z, \mb z' \in \mathbb R^D$ and $f: \mathbb R^D \mapsto\mathbb R^D$ is an invertible function. 
In general the cost of computing the Jacobian will be $\mathcal{O}(D^3)$. However, it is possible to design transformations with more efficiently computable Jacobians.

This strategy is used in variational inference as follows: first, a stochastic variable is drawn from a simple base posterior distribution such as a diagonal Gaussian $\mathcal N(\mb z_0|\bs \mu(\mb{x}), \bs \sigma^2(\mb{x}))$. The sample is then transformed  with a number of flows. 
After applying $K$ flows, the final latent stochastic variables are given by
$\mb z_K = f_K \circ\dots f_2 \circ f_1(\mb z_0)$. The corresponding log-density is then given by:
\begin{align}
\log q_K(\mb{z}_K|\mb{x}) =& \log q_0(\mb{z}_0|\mb{x}) \nonumber\\
- \sum_{k=1}^{K}& {\log \left| \det \left( \frac{\partial f_k(\mb{z}_{k-1}; \lambda_k(\mb{x}))}{\partial \mb{z}_{k-1}} \right) \right| },
\end{align}
where $\lambda_k$ are the parameters of the $k$-th transformation. Note that in order to achieve a flexible amortization strategy, the flow parameters $\lambda_k$ can be made dependent on the input data: $\lambda_k = \lambda_k (\mb x)$  \citep{RezMoh2015}. 
Given a variational posterior $q_\phi(\mb{z}|\mb{x})= q_K(\mb{z}|\mb{x})$ parametrized by a normalizing flow of length K, the variational objective can be rewritten as:
\begin{align}
\mathcal{F}(\theta, \phi) &= \mathbb{E}_{q_\phi}[\log q_\phi(\mb{z}|\mb{x})-\log p_\theta(\mb{x}, \mb{z})]\\
&= \mathbb{E}_{q_0}[\log q_0(\mb{z}_0|\mb{x}) - \log p_\theta(\mb{x}, \mb{z}) ]\nonumber\\
&-\mathbb{E}_{q_0}\left[\sum_{k=1}^{K}{\log \left| \det \left( \frac{\partial f_k(\mb{z}_{k-1}; \lambda_k(\mb{x}))}{\partial \mb{z}_{k-1}} \right) \right|}\right].
\end{align}

Normalizing flows are frequently applied to amortized variational inference. Instead of learning the parameters of the posterior distribution for each data point, (such as $\bs \mu$ and $\bs \sigma$ for a Gaussian posterior), the input-dependence of the posterior distribution parameters is modeled through an encoder/inference network. When performing amortized inference for normalizing flows, the flow parameters determine the final distribution, and should thus also be considered functions of the datapoint $\mb x$. This can be achieved through the use of hypernetworks \citep{ha2016hypernetworks}.

\cite{RezMoh2015} introduced a normalizing flow, called planar flow, for which the Jacobian determinant could be computed efficiently. A single transformation of the planar flow is given by:
\begin{align}
\mb{z'} = \mb{z} + \mb{u}h(\mb{w}^T\mb{z} + b).
\label{eq:planar_flow}
\end{align}
Here, $\mb{u},\mb{w}\in \mathbb{R}^D$, $b\in\mathbb{R}$ and $h$ is a suitable smooth activation function. \cite{RezMoh2015} show that for $h=\tanh$, transformations of this kind are invertible as long as $\mb{u}^T\mb{w}\geq -1$. 

By the \textit{Matrix determinant lemma} the Jacobian of this transformation is given by:
\begin{align}
\det\frac{\partial \mb{z'}}{\partial \mb{z}} &= \det\left(\mb{I} + \mb{u}h'(\mb{w}^T\mb{z}+b)\mb{w}^T \right) \notag \\
&= 1 + \mb{u}^Th'(\mb{w}^T\mb{z}+b)\mb{w},
\end{align}
where $h'$ denotes the derivative of $h$ and which can be computed in $O(D)$ time.

In practice, many planar flow transformations are required to transform a simple base distribution into a flexible distribution, especially for high dimensional latent spaces. 
\cite{KinSalJoz2016} argue that this is related to the term $\mb{u}h(\mb{w}^T\mb{z} + b)$ in Eq. \eqref{eq:planar_flow}, which effectively acts as a single-neuron MLP. In the next section we will derive a generalization of planar flows, which does not have a single-neuron bottleneck, while still maintaining the property of an efficiently computable Jacobian determinant.

\section{SYLVESTER NORMALIZING FLOWS}

Consider the following more general transformation similar to a single layer MLP with $M$ hidden units and a residual connection:
\begin{equation}
\mb{z'} = \mb{z} + \mb{A}h(\mb{B}\mb{z} + \mb{b}),
\label{eq:general_planar}
\end{equation}
with $\mb{A}\in\mathbb{R}^{D\times M},\mb{B}\in \mathbb{R}^{M\times D}$, $\mb{b} \in \mathbb{R}^{M}$, and $M \leq D$. The Jacobian determinant of this transformation can be obtained using Sylvester's determinant identity, which is a generalization of the matrix determinant lemma.
\begin{theorem}[Sylvester's determinant identity]\label{thm:Sylvester}
For all $\mb{A}\in\mathbb{R}^{D\times M},\mb{B}\in \mathbb{R}^{M\times D}$, 
\begin{align}
\det\left( {\mb{I}_D + \mb{A}\mb{B}}\right) =\det\left(\mb{I}_M + \mb{B}\mb{A}\right),
\end{align}
where $\mb I_M$ and $\mb I_D$ are $M$ and $D$-dimensional identity matrices, respectively.
\end{theorem}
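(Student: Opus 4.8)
The plan is to prove the identity by embedding both sides into a single $(D+M)\times(D+M)$ block matrix and evaluating its determinant in two different ways via the Schur complement. First I would introduce
\[
\mb{M} = \begin{pmatrix} \mb{I}_D & -\mb{A} \\ \mb{B} & \mb{I}_M \end{pmatrix},
\]
choosing the sign on $\mb{A}$ precisely so that both reductions below produce a $+$ sign in front of the product.

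The key steps are as follows. First, I use the invertibility of the top-left block $\mb{I}_D$ to block-eliminate the lower-left block: left-multiplying $\mb{M}$ by the unit-lower-triangular matrix $\left(\begin{smallmatrix} \mb{I}_D & \mb{0} \\ -\mb{B} & \mb{I}_M \end{smallmatrix}\right)$, which has determinant $1$, annihilates $\mb{B}$ and turns the bottom-right block into the Schur complement $\mb{I}_M + \mb{B}\mb{A}$. Since the resulting product is block upper-triangular, $\det\mb{M} = \det(\mb{I}_D)\,\det(\mb{I}_M + \mb{B}\mb{A}) = \det(\mb{I}_M + \mb{B}\mb{A})$. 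Symmetrically, I use the invertibility of the bottom-right block $\mb{I}_M$: the block LDU factorization $\mb{M} = \mb{L}\,\mb{D}\,\mb{U}$ with unit-triangular factors $\mb{L},\mb{U}$ and $\mb{D} = \mathrm{diag}(\mb{I}_D + \mb{A}\mb{B},\ \mb{I}_M)$ gives $\det\mb{M} = \det(\mb{I}_D + \mb{A}\mb{B})$. Equating the two expressions for $\det\mb{M}$ yields the claim.

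The two block reductions are purely mechanical matrix multiplications, so no step is conceptually hard; the only place demanding care is the sign bookkeeping in the definition of $\mb{M}$ and in the two Schur complements, so that the cross terms emerge as $+\mb{B}\mb{A}$ and $+\mb{A}\mb{B}$ rather than with a minus sign. I expect this sign tracking, rather than any genuine difficulty, to be the main obstacle: a careless sign choice yields the correct but cosmetically different statement $\det(\mb{I}_D - \mb{A}\mb{B}) = \det(\mb{I}_M - \mb{B}\mb{A})$.

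An alternative route would be to note that $\mb{A}\mb{B}$ and $\mb{B}\mb{A}$ share the same nonzero eigenvalues with equal algebraic multiplicities, the surplus $D-M$ eigenvalues of $\mb{A}\mb{B}$ all being zero and hence contributing factors of $1$, so that $\det(\mb{I}_D + \mb{A}\mb{B}) = \prod_i(1+\lambda_i) = \det(\mb{I}_M + \mb{B}\mb{A})$. However, justifying the multiplicity claim rigorously for a possibly non-diagonalizable product reduces once more to essentially the block-matrix computation, so I would keep the Schur complement argument as the primary proof.
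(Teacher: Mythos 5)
Your proof is correct, but there is nothing in the paper to compare it against: the paper states Theorem~\ref{thm:Sylvester} as a known classical fact (presented as a generalization of the matrix determinant lemma) and never proves it --- the only proof in the paper is for Theorem~2, the invertibility of the Sylvester flow transformation. What you have written is the standard Schur-complement/block-LDU proof, and it is complete. With
\begin{equation*}
\mb{M} = \begin{pmatrix} \mb{I}_D & -\mb{A} \\ \mb{B} & \mb{I}_M \end{pmatrix},
\end{equation*}
left-multiplication by $\left(\begin{smallmatrix} \mb{I}_D & \mb{0} \\ -\mb{B} & \mb{I}_M \end{smallmatrix}\right)$ (determinant $1$) produces the block upper-triangular matrix $\left(\begin{smallmatrix} \mb{I}_D & -\mb{A} \\ \mb{0} & \mb{I}_M + \mb{B}\mb{A} \end{smallmatrix}\right)$, so $\det\mb{M} = \det(\mb{I}_M + \mb{B}\mb{A})$; the LDU factorization around the bottom-right block, $\mb{M} = \left(\begin{smallmatrix} \mb{I}_D & -\mb{A} \\ \mb{0} & \mb{I}_M \end{smallmatrix}\right)\left(\begin{smallmatrix} \mb{I}_D + \mb{A}\mb{B} & \mb{0} \\ \mb{0} & \mb{I}_M \end{smallmatrix}\right)\left(\begin{smallmatrix} \mb{I}_D & \mb{0} \\ \mb{B} & \mb{I}_M \end{smallmatrix}\right)$, gives $\det\mb{M} = \det(\mb{I}_D + \mb{A}\mb{B})$, and equating the two finishes the argument. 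Your sign convention does what you intend: the cross terms come out as $+\mb{B}\mb{A}$ and $+\mb{A}\mb{B}$, and your remark that the opposite choice yields the equivalent ``minus'' form is accurate. Your caution about the eigenvalue route is also well placed: the claim that $\mb{A}\mb{B}$ and $\mb{B}\mb{A}$ share nonzero eigenvalues with equal algebraic multiplicities needs its own justification for non-diagonalizable products (e.g., via the same block similarity, or by perturbing to the full-rank case and using continuity of the characteristic polynomial), so leading with the Schur-complement argument is the right choice.
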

When $M<D$, the computation of the determinant of a $D\times D$ matrix is thus reduced to the computation of the determinant of an $M \times M$ matrix. 

Using Sylvester's determinant identity, the Jacobian determinant of the transformation in Eq. \eqref{eq:general_planar} is given by:
\begin{equation}
\det \left(\frac{\partial \mb{z'}}{\partial \mb{z}}\right) = \det \left( \mb{I}_M + \mathrm{diag}\left(h'(\mb{B}\mb{z} + \mb{b})\right)\mb{B}\mb{A} \right).
\end{equation}
Since Sylvester's determinant identity plays a crucial role in the proposed family of normalizing flows, we will refer to them as \textit{Sylvester normalizing flows}.

\subsection{PARAMETERIZATION OF $\mathbf{A}$ AND $\mathbf{B}$}

In general, the transformation in (\ref{eq:general_planar}) will not be invertible.
Therefore, we propose the following special case of the above transformation:
\begin{align}
\mb{z'} = \mb{z} + \mb{Q}\mb{R}h(\mb{\tilde R}\mb{Q}^T\mb{z} + \mb{b}) = \phi(\mb{z})\label{eq:orthoflow},
\end{align}
where $\mb{R}$ and $\mb{\tilde R}$ are upper triangular $M \times M$ matrices, and 
\[\mb{Q}=\left( \mb{q}_1 \ldots \mb{q}_M \right)\] with the columns $\mb{q}_{m} \in \mathbb R^D$ forming an orthonormal set of vectors.
By theorem \ref{thm:Sylvester}, the determinant of the Jacobian $\mb{J}$ of this transformation reduces to:
\begin{align}
\det \mb{J}
%&= \det \left( \mb{I}_D + \mb{Q}\mb{D}\mathrm{diag}\left(h'(\mb{\tilde R}\mb{Q}^T\mb{z} + \mb{b})\right)\mb{\tilde D}\mb{Q}^T \right)\\
 &= \det \left( \mb{I}_M + \mathrm{diag}\left(h'(\mb{\tilde R}\mb{Q}^T\mb{z} + \mb{b})\right)\mb{\tilde R}\mb{Q}^T\mb{Q}\mb{R} \right)\notag \\
 &= \det \left(\mb{I}_M + \mathrm{diag}\left(h'(\mb{\tilde R}\mb{Q}^T\mb{z} + \mb{b})\right)\mb{\tilde R}\mb{R}\right),
\end{align}
which can be computed in $O(M)$, since $\mb{\tilde{R}R}$ is also upper triangular. The following theorem gives a sufficient condition for this transformation to be invertible. 

\begin{theorem}
Let $\mb{R}$ and $\mb{\tilde R}$ be upper triangular matrices. Let $h: \mathbb{R}\longrightarrow\mathbb{R}$ be a smooth function with bounded, positive derivative. Then, if the diagonal entries of $\mb{R}$ and $\mb{\tilde R}$ satisfy $r_{ii} \tilde r_{ii} > -1/\|h'\|_\infty$ and $\mb{\tilde R}$ is invertible, the transformation given by \eqref{eq:orthoflow} is invertible. 
\end{theorem}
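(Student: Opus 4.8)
The plan is to exploit the orthonormal columns of $\mb{Q}$ to collapse the $D$-dimensional problem to an $M$-dimensional one, and then use the triangular structure of $\mb{R}$ and $\mb{\tilde R}$ to reduce that further to a family of scalar problems that can be solved by back-substitution. Invertibility here should mean bijectivity of $\phi$ on $\mathbb{R}^D$, which is what the reduction will establish.

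First I would split $\mathbb{R}^D$ into the column space of $\mb{Q}$ and its orthogonal complement. Writing $\mb{z} = \mb{Q}\mb{y} + \mb{w}$ with $\mb{y} = \mb{Q}^T\mb{z} \in \mathbb{R}^M$ and $\mb{w} = (\mb{I}_D - \mb{Q}\mb{Q}^T)\mb{z}$, and using $\mb{Q}^T\mb{Q} = \mb{I}_M$ (so that $\mb{Q}^T\mb{w}=0$), the transformation becomes $\phi(\mb{z}) = \mb{Q}\,\psi(\mb{y}) + \mb{w}$, where $\psi(\mb{y}) = \mb{y} + \mb{R}\,h(\mb{\tilde R}\mb{y} + \mb{b})$. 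Since $\phi$ leaves the orthogonal component $\mb{w}$ untouched and acts as $\psi$ on the coordinates $\mb{y}$, and since the decomposition $\mb{z}\mapsto(\mb{y},\mb{w})$ is a linear isomorphism, $\phi$ is a bijection of $\mathbb{R}^D$ if and only if $\psi$ is a bijection of $\mathbb{R}^M$.

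Next, because $\mb{\tilde R}$ is invertible I would change variables to $\mb{s} = \mb{\tilde R}\mb{y} + \mb{b}$; this affine change is itself a bijection, so invertibility of $\psi$ is equivalent to invertibility of $g(\mb{s}) = \mb{s} + \mb{\tilde R}\mb{R}\,h(\mb{s})$. The matrix $\mb{M} := \mb{\tilde R}\mb{R}$ is again upper triangular, with diagonal entries $m_{ii} = \tilde r_{ii}\, r_{ii}$, so componentwise $g_i(\mb{s}) = s_i + m_{ii}\,h(s_i) + \sum_{j>i} m_{ij}\,h(s_j)$. This triangular form invites back-substitution: given a target vector, I would solve for $s_M, s_{M-1}, \dots, s_1$ in turn, at each step treating the already-determined higher-index terms $\sum_{j>i} m_{ij}h(s_j)$ as a known constant, so that the $i$-th equation reduces to the scalar equation $t \mapsto t + m_{ii}h(t) = \text{const}$.

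The crux is the scalar claim that each map $t \mapsto t + m_{ii}h(t)$ is a bijection of $\mathbb{R}$. Its derivative is $1 + m_{ii}h'(t)$; using $0 < h'(t) \leq \|h'\|_\infty$ together with the hypothesis $m_{ii} = r_{ii}\tilde r_{ii} > -1/\|h'\|_\infty$, I would bound this derivative below by $\min\!\big(1,\, 1 + m_{ii}\|h'\|_\infty\big) > 0$. I expect this to be the main obstacle: pointwise positivity of the derivative alone only yields injectivity, so the essential point is to extract a \emph{uniform} positive lower bound, which forces strict monotonicity and guarantees the map ranges over all of $\mathbb{R}$, hence a scalar bijection. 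With each scalar equation uniquely solvable, back-substitution produces a unique preimage $\mb{s}$ for every target, so $g$ is a bijection, and unwinding the two reductions shows the same for $\psi$ and therefore for $\phi$.
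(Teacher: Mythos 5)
Your proposal is correct and follows essentially the same route as the paper's proof: the same decomposition of $\mathbb{R}^D$ into the column space of $\mb{Q}$ and its orthogonal complement, the same use of the invertibility of $\mb{\tilde R}$ to change variables, and the same back-substitution on an upper-triangular system terminating in scalar monotone equations. The only structural difference is organizational: the paper proceeds through three cases of increasing generality (both matrices diagonal; $\mb{R}$ triangular with $\mb{\tilde R}$ diagonal; both triangular, the last reduced to the middle case by multiplying through by $\mb{\tilde R}$), whereas you compose the two reductions up front and run the back-substitution once on $g(\mb{s}) = \mb{s} + \mb{\tilde R}\mb{R}\,h(\mb{s})$ --- cleaner, but not a genuinely different argument. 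One point of yours is a real improvement over the paper: the paper asserts $f_i'(v)>0$ ``and thus $f_i$ is invertible,'' but pointwise positivity of the derivative only gives injectivity (consider a bounded strictly increasing function); since the back-substitution must apply $f_k^{-1}$ to an arbitrary real number, surjectivity of each scalar map onto $\mathbb{R}$ is genuinely needed for the induction to go through. Your uniform lower bound $\min\left(1,\,1+m_{ii}\|h'\|_\infty\right) > 0$ on the derivative, which forces at least linear growth in both directions and hence surjectivity, supplies exactly this missing step.
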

\begin{proof}
\textbf{Case 1: $\mb{R}$ and $\mb{\tilde{R}}$ diagonal}\par
Recall that one-dimensional real functions with strictly positive derivatives are invertible. The columns of $\mb{Q}$ are orthonormal and span a subspace $\mathcal{W}=\text{span}\{\mb{q}_1,\ldots,\mb{q}_M\}$ of  $\mathbb{R}^D$. Let $\mathcal{W}^{\perp}$ denote its orthogonal complement. We can decompose $\mb{z} =  \mb{z}_\parallel + \mb{z}_\perp$, where $\mb{z}_\parallel \in \mathcal{W}$ and $\mb{z}_\perp\in\mathcal{W}^\perp$. Similarly we can decompose $\mb{z}' =  \mb{z}'_\parallel + \mb{z}'_\perp$. Clearly, $\mb{Q}\mb{R}h(\mb{\tilde R}\mb{Q}^T\mb{z} + \mb b)\in \mathcal{W}$. Hence $\phi$ only acts on $\mb{z}_\parallel$ and $\mb z_\perp = \phi(\mb{z})_\perp = \mb z'_\perp$. Thus, it suffices to consider the effect of $\phi$ on $\mb{z}_\parallel$. Multiplying \eqref{eq:orthoflow} by $\mb{Q}^T$ from the left gives:
\begin{align}
\underbrace{\mb{Q}^T\mb{z}'}_{\mb{v}'} &= \underbrace{\mb{Q}^T\mb{z}}_{\mb{v}} + \mb{R}h(\mb{\tilde R}\underbrace{\mb{Q}^T\mb{z}}_{\mb{v}}+\mb{b}) \notag \\
&= (f_1(v_1), \ldots, f_M(v_M))^T, \label{eq:uncoupled}
\end{align}
where the vectors $\mb{v}$ and $\mb{v}'$ are the respective coordinates of $\mb{z}_\parallel$ and $\mb{z}'_\parallel$ w.r.t. $\mb{q}_1,\ldots,\mb{q}_M$. The dimensions in \eqref{eq:uncoupled} are completely independent and each dimension is transformed by a real function $f_i(v) = v + r_{ii}h(\tilde r_{ii} v + b_i)$. Consider a single dimension $i$ of \eqref{eq:uncoupled}. Since $\|h'\|_\infty r_{ii} \tilde r_{ii} > -1$, we have $f_i'(v)>0$ and thus $f_i$ is invertible. Since all dimensions are independent and the transformation is invertible in each dimension we can find $f^{-1}:\mathcal{W}\rightarrow\mathcal{W}$ such that $\mb{z}_\parallel=f^{-1}(\mb{z}'_\parallel)$. 
Hence we can write the inverse of $\phi$ as:
\begin{align}
\phi^{-1}(\mb{z}') = \underbrace{\mb{z}'_\perp}_{\mb{z}_\perp} + \underbrace{f^{-1}(\mb{z}'_\parallel)}_{\mb{z}_\parallel}=\mb{z},
\end{align}
\par
\textbf{Case 2: $\mb{R}$ triangular, $\mb{\tilde R}$ diagonal}\par
Let us now consider the case when $\mb{R}$ is an upper triangular matrix.
By the argument for the diagonal case above, it suffices to consider the effect of the transformation in $\mathcal{W}$. Multiplying \eqref{eq:orthoflow} by $\mb{Q}^T$ from the left gives:
\begin{align}
\underbrace{\mb{Q}^T\mb{z}'}_{\mb{v}'} = \underbrace{\mb{Q}^T\mb{z}}_{\mb{v}} + \mb{R}h(\mb{\tilde R}\underbrace{\mb{Q}^T\mb{z}}_{\mb{v}}+\mb{b}) \label{eq:triangular}
\end{align}
where the vectors $\mb{v}$ and $\mb{v}'$ contain the respective coordinates of $\mb{z}_\parallel$ and $\mb{z}'_\parallel$ w.r.t. $\mb{q}_1,\ldots,\mb{q}_M$. As in the diagonal case consider the functions $f_i(v) = v + r_{ii}h(\tilde r_{ii} v + b_i)$. Since $\|h'\|_\infty r_{ii} \tilde r_{ii} > -1$, we have $f_i'(v)>0$ and thus $f_i$ is invertible. Let us rewrite \eqref{eq:triangular} in terms of $f_i$:
\begin{align}
v'_{1} &= f_{1}(v_{1}) + \sum_{j=2}^{M}r_{1j}h(\tilde r_{jj} v_j+b_j)\\
\ldots \nonumber\\
v'_{k} &= f_{k}(v_{k}) + \sum_{j=k+1}^{M}r_{kj}h(\tilde r_{jj} v_j+b_j)\\
\ldots\nonumber\\
v'_{M} &= f_{M}(v_{M})
\end{align}
Since $f_M$ is invertible we can write $v_M = f^{-1}_M(v_M')$. Now suppose we have expressed $\{v_j,\forall j>k\}$ in terms of $\{v'_{j}, \forall j > k\}$. Then
\begin{align}
f_{k}(v_{k}) &= v'_{k} - \underbrace{\sum_{j=k+1}^{M}{r_{kj}h(\tilde r_{jj} v_j+b_j)}}_{\text{some function of }\{v'_{j}, \forall j > k\}} \notag \\
&=: g_k(v'_k, v'_{k+1}, \ldots, v'_{M})\\
v_{k} &= f_k^{-1}(g_{k}(v'_k, v'_{k+1}, \ldots, v'_{M})). \notag
\end{align}
Thus we have expressed $\{v_j,\forall j\geq k\}$ in terms of $\{v'_{j}, \forall j \geq k\}$. By induction, we can express $\{v_j,\forall j\}$ in terms of $\{v'_{j}, \forall j\}$ and hence the transformation is invertible. 

\textbf{Case 3: $\mb{R}$ and $\mb{\tilde{R}}$ triangular}\par
Now consider the general case when $\mb{\tilde{R}}$ is triangular. As before we only need to consider the effect of the transformation in $\mathcal{W}$.
\begin{align}
\underbrace{\mb{Q}^T\mb{z}'}_{\mb{v}'} = \underbrace{\mb{Q}^T\mb{z}}_{\mb{v}} + \mb{R}h(\mb{\tilde R}\underbrace{\mb{Q}^T\mb{z}}_{\mb{v}}+\mb{b}) \label{eq:double-triangular}
\end{align}
Let $g$ be the function $g(\mb{v}) = \mb{\tilde R} \mb{v}$. By assumption, $g$ is invertible with inverse $g^{-1}$. Multiplying \eqref{eq:double-triangular} by $\mb{\tilde R}$ gives:
\begin{align}
g(\mb{v'}) = \underbrace{g(\mb{v}) + \mb{\tilde R}\mb{R}h(g(\mb{v})+\mb{b})}_{=:f(g(\mb{v}))}
\end{align}
Since $\mb{\tilde R}\mb{R}$ is upper triangular with diagonal entries $\tilde{r}_{jj}r_{jj}$, $f$ is covered by case 2 considered before and is invertible. Thus, $\mb{v}$ can be written as:
\begin{align}
\mb{v} = g^{-1}(f^{-1}(g(\mb{v'}))). 
\end{align}
Hence the transformation in \eqref{eq:double-triangular} is invertible.
\end{proof}

\subsection{PRESERVING ORTHOGONALITY OF $\mathbf{Q}$}

Orthogonality is a convenient property, mathematically, but hard to achieve in practice. In this paper we consider three different flows based on the theorem above and various ways to preserve the orthogonality of $\mb{Q}$. The first two use explicit differentiable constructions of orthogonal matrices, while the third variant assumes a specific fixed permutation matrix as the orthogonal matrix.

\paragraph{Orthogonal Sylvester flows.} First, we consider a Sylvester flow using matrices with $M$ orthogonal columns (O-SNF). In this flow we can choose $M<D$, and thus introduce a flexible bottleneck. Similar to \citep{hasenclever2017}, we ensure orthogonality of $\mb{Q}$ by applying the following differentiable iterative procedure proposed by \citep{BB:71, K:70}:
\begin{equation}
\mb{Q}^{(k+1)} = \mb{Q}^{(k)} \left( \mb{I} +\frac{1}{2} \left( \mb{I} - \mb{Q}^{(k) \top} \mb{Q}^{(k)} \right) \right) .
\end{equation}
with a sufficient condition for convergence given by $\| \mb{Q}^{(0)\top} \mb{Q}^{(0)} - \mb{I} \|_{2} < 1$. Here, the 2-norm of a matrix $\mb X$ refers to $\|\mb X\|_2 = \lambda_{\mathrm{max}}(\mb X)$, with $\lambda_{\mathrm{max}}(\mb X)$ representing the largest singular value of $\mb X$.
In our experimental evaluations we ran the iterative procedure until $\| \mb{Q}^{(k)\top} \mb{Q}^{(k)} - \mb{I} \|_{F} \leq \epsilon$, with $\|\mb X\|_F$ the Frobenius norm, and $\epsilon$ a small convergence threshold. We observed that running this procedure up to $30$ steps was sufficient to ensure convergence with respect to this threshold. To minimize the computational overhead introduced by orthogonalization we perform this orthogonalization in parallel for all flows.

Since this orthogonalization procedure is differentiable, it allows for the calculation of gradients with respect to $\mb{Q}^{(0)}$ by backpropagation, allowing for any standard optimization scheme such as stochastic gradient descent to be used for updating the flow parameters.

\paragraph{Householder Sylvester flows.} Second, we study Householder Sylvester flows (H-SNF) where the orthogonal matrices are constructed by products of Householder reflections. Householder transformations are reflections about hyperplanes. Let $\mb{v}\in\mathbb{R}^D$, then the reflection about the hyperplane orthogonal to $\mb{v}$ is given by:
\begin{align}
H(\mb{z}) = \mb{z} - 2 \frac{\mb{v}\mb{v}^T}{\|\mb{v}\|^2}\mb{z}
\end{align}

It is worth noting that performing a single Householder transformation is very cheap to compute, as it only requires $D$ parameters. Chaining together several Householder transformations results in more general orthogonal matrices, and it can be shown \citep{hh1, hh2} that any $M \times M$ orthogonal matrix can be written as the product of $M-1$ Householder transformations. In our Householder Sylvester flow, the number of Householder transformations $H$ is a hyperparameter that trades off the number of parameters and the generality of the orthogonal transformation. Note that the use of Householder transformations forces us to use $M=D$, since Householder transformation result in square matrices.

\paragraph{Triangular Sylvester flows.} Third, we consider a triangular Sylvester flow (T-SNF), in which all orthogonal matrices $\mb{Q}$ alternate per transformation between the identity matrix and the permutation matrix corresponding to reversing the order of $\mb z$. This is equivalent to alternating between lower and upper triangular $\mb {\tilde R}$ and $\mb R$ for each flow.

%\begin{figure*}[!ht]
%\centering
%{\includegraphics[width=0.45\textwidth]{snf.pdf}}
%\hspace{5mm}
%{\includegraphics[width=0.45\textwidth]{iaf.pdf}}
%\caption{Different amortization strategies for Sylvester normalizing flows and Inverse Autoregressive Flows. Left: our inference network produces input-dependent flow parameters through a hypernetwork \citep{ha2016hypernetworks}. This strategy is also employed by planar flows. Right: IAF has a large number of parameters, and introduces a measure of $\mb{x}$ dependence through a context $\mb{h}(\mb{x})$. This context acts as an additional input for each transformation. The flow parameters themselves are independent of $\mb{x}$.}
%\label{fig:Amortization}
%\end{figure*}
\begin{figure*}[!ht]
\centering
{\includegraphics[width=0.49\textwidth]{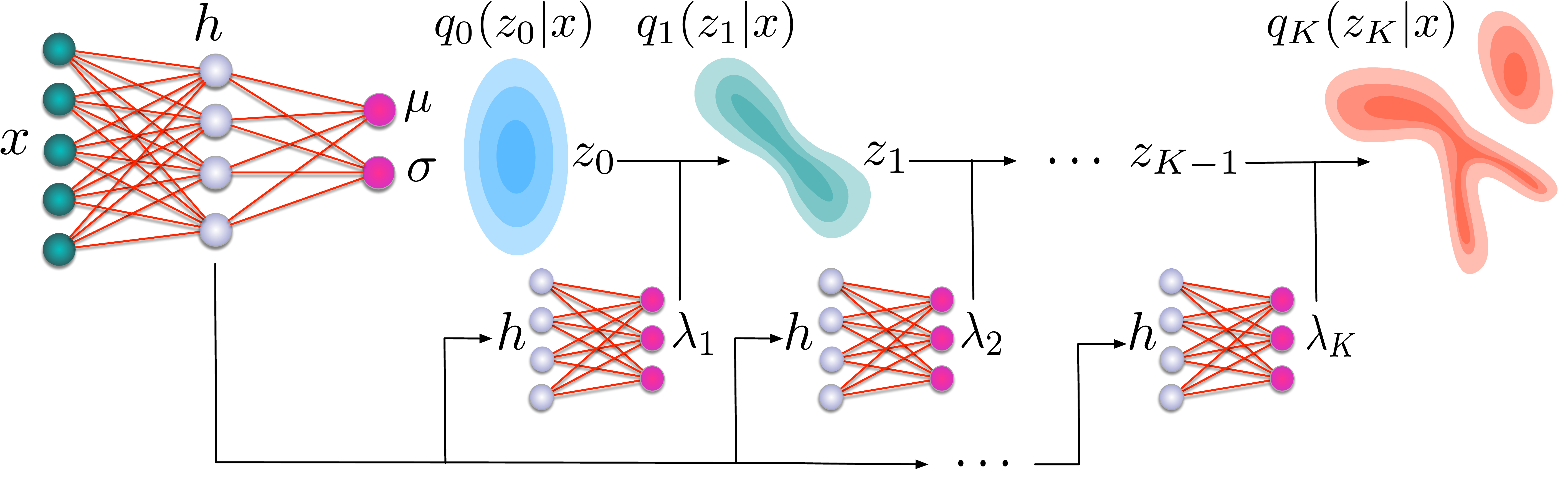}}
\hspace{1mm}
{\includegraphics[width=0.49\textwidth]{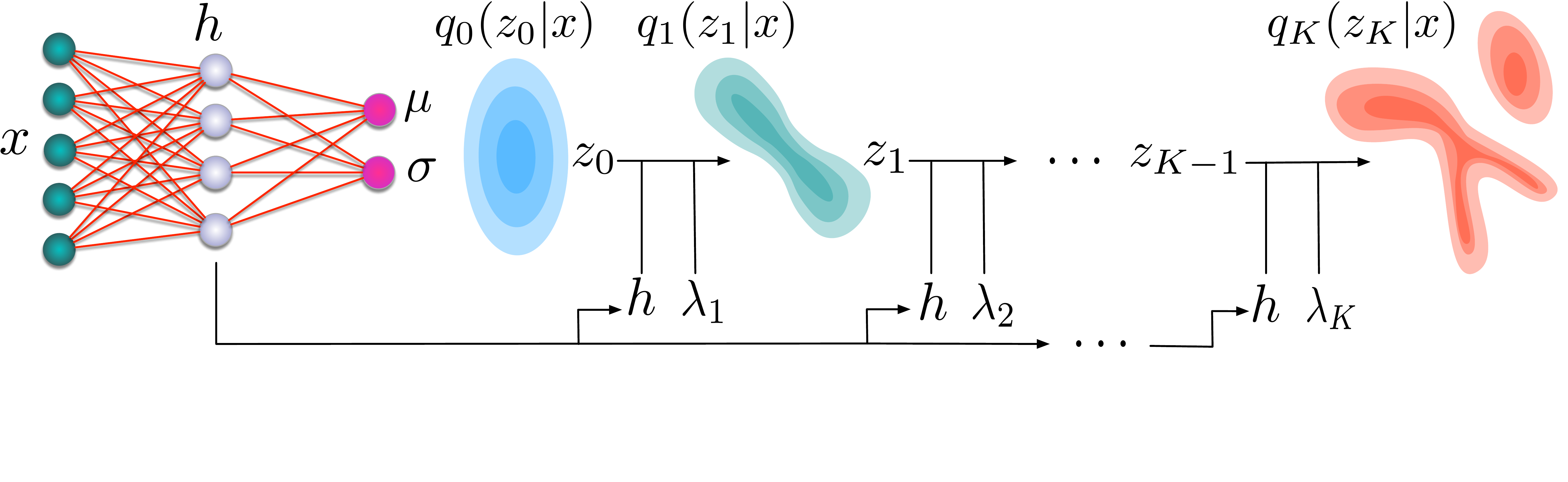}}
\caption{Different amortization strategies for Sylvester normalizing flows and Inverse Autoregressive Flows. Left: our inference network produces input-dependent flow parameters through a hypernetwork \citep{ha2016hypernetworks}. This strategy is also employed by planar flows. Right:  IAF introduces a measure of $\mb{x}$ dependence through a context $\mb{h}(\mb{x})$. This context acts as an additional input for each transformation. The flow parameters themselves are independent of $\mb{x}$, but the number of hidden units per flow is larger than in SNF.}
\label{fig:Amortization}
\end{figure*}

\subsection{DATA-DEPENDENT FLOW PARAMETERS}
As previously mentioned, the parameters of the base distribution as well as the flow parameters can be functions of the data point $\mb{x}$ \citep{RezMoh2015}.  Figure \ref{fig:Amortization} (left) shows a diagram of one SNF step and the amortization procedure. The inference network takes datapoints $\mb x$ as input, and provides as an output the mean and variance of $\mb z^0$ such that $\mb z^0 \sim \mathcal N(\mb z | \pmb \mu^0(\mb x), (\pmb \sigma^{0})^2(\mb x))$. Several SNF transformations are then applied, such that $\mb z^0 \rightarrow \mb z^1\rightarrow \hdots \mb z^K$, producing a flexible posterior distribution for $\mb z^K$. All of the flow parameters ($\mb R$, $\mb {\tilde R}$ and $\mb Q$ for each transformation) are produced as an output of a hypernetwork (attached to the inference network) and are thus functions of $\mb x$.

\section{RELATED WORK}
\label{sec:related_work}
\subsection{NORMALIZING FLOWS FOR VARIATIONAL INFERENCE}

A number of invertible transformations with tractable Jacobians have been proposed in recent years. \citet{RezMoh2015} first discussed such transformations in the context of stochastic variation inference, coining the term normalizing flows. 

\citet{RezMoh2015} proposed two different parametric families of transformations with tractable Jacobians: planar and radial flows. While effective for small problems, these transformations are hard to scale to large latent spaces and often require a large number of transformations. The transformation corresponding to planar flows is given in Eq. \eqref{eq:planar_flow}.

%Householder Sylvester flows and Orthogonal Sylvester flows can be seen as a generalisation of planar flows that remove the single-unit bottleneck in planar flows. 
More recently, a successful class of flows called Inverse Autoregressive Flows was introduced in \citep{KinSalJoz2016}. As the name suggests, one IAF transformation can be seen as the inverse of an autoregressive transformation. Consider the following autoregressive transformation:
\begin{align}
&z_0 = \bar{\mu}_0 + \bar\sigma_0 \cdot \epsilon_0 \notag \\
&z_i = \bar\mu_i(\mb z_{1:i-1}) + \bar\sigma_i(\mb z_{1:i-1}) \cdot \epsilon_i, \quad  i=1, \hdots, D
\end{align}
with $\bs \epsilon \sim \mathcal N(\mb 0, \mb I)$. This transformation models the distribution over the variable $\mb z$ with an autoregressive factorization $p(\mb z) = p(z_0) \prod_{i=1}^D p(z_i|z_{i-1}, \hdots, z_0)$. Since the parameters of transformation for $z_i$ are dependent on $\mb z_{1:i-1}$, this procedure requires $D$ sequential steps to sample a single vector $\mb z$. This is undesirable for variational inference, where sampling occurs for every forward pass. 

However, the inverse transformation (which exists if $\bar \sigma_i>0$ $ \forall i$) is easy to sample from:
\begin{align}
\epsilon_i = \frac{z_i - \bar\mu_i(\mb z_{1:i-1})}{\bar \sigma_i(\mb z_{1:i-1})}. 
\end{align}
For this inverse transformation, $\epsilon_i$ is no longer dependent on the transformation of $\epsilon_j$ for $j\neq i$. Hence, this transformation can be computed in parallel: $\bs \epsilon = (\mb z - \bs {\bar\mu} (\mb z))/\bs {\bar\sigma}(\mb z)$. Rewriting $\sigma_i(z_{1:i-1)} = 1/\bar \sigma_i(z_{1:i-1)}$ and  $\mu_i(z_{1:i-1)} = -\bar\mu(z_{1:i-1})/\bar \sigma_i(z_{1:i-1)}$, yields the IAF transformation:
\begin{align}
z^t_i = \mu^t_i(\mb z^{t-1}_{1:i-1}) + \sigma^t_i(\mb z^{t-1}_{1:i-1}) \cdot z^{t-1}_i, \quad i=1,...,D. 
\label{eq:iaf}
\end{align}
Starting from $\mb z^0 \sim \mathcal N(\mb 0, \mb I)$, multiple IAF transformations can be stacked on top of each other to produce flexible probability distributions.

If $\bs \mu^t$ and $\bs \sigma^t$ depend on $\mb z^{t-1}$ linearly, IAF can model full covariance Gaussian distributions. In order to move away from Gaussian distributions to more flexible distributions, it is important that $\bs \mu^t$ and $\bs \sigma^t$ are nonlinear functions of $\mb z^{t-1}$. 

In practice, wide MADEs \citep{Germain15} or deep PixelCNN layers \citep{Oord_pixelcnn_NIPS2016} are needed to increase the flexibility of IAF transformations. This results in transformations with a large number of parameters. As shown in Figure \ref{fig:Amortization} (right), amortization is achieved through a context $\mb{h}(\mb{x})$ that is fed into the autoregressive networks as an additional input at every IAF step. 

Our Triangular Sylvester flows are strongly related to mean-only IAF transformations ($\bs \sigma^t = 1$). As mentioned in \cite{KinSalJoz2016}, between every IAF transformation the order of $\mb z$ is reversed, in order to ensure that on average all dimensions get warped equally. In T-SNF, the same effect is achieved by using the permutation matrix that reverses the order of $\mb z$ in every other transformation as the orthogonal matrix. However, mean-only IAF is a volume-preserving transformation, i.e. the determinant of the Jacobian has absolute value one. T-SNF is not volume preserving due to the nonzero elements on the diagonals of $\mb R$ and $\mb {\tilde{R}}$. Note, that in \cite{KinSalJoz2016} it was shown that the difference in performance between mean-only IAF and the general IAF transformation was negligible.

The most important difference between IAF and T-SNF is the way parameters are amortized. In T-SNF, $\mb R$ and $\mb {\tilde R}$ are directly amortized functions of the input $\mb x$ (see Fig. \ref{fig:Amortization}). This is equivalent to amortizing the MADE parameters in mean-only IAF. Having input dependent MADE parameters allows for flexible transformations with fewer parameters.

Householder Sylvesters flows can also be seen as a non-linear extension of Householder flows \citep{TW:16}. Householder flows are volume-preserving flows, which transform the variational posterior with a diagonal covariance matrix to a full-covariance posterior. Householder flows are a special case of H-SNF if $h(\mb{z}) = \mb{z}$, $\mb{R}$ is the identity matrix, and the residual connection in Eq. \eqref{eq:orthoflow} is left out.

\subsection{NORMALIZING FLOWS FOR DENSITY ESTIMATION}

A number of invertible transformations have been proposed in the context of density estimation. Note that density estimation requires the inverse of the flow to be tractable.
Having a provably invertible transformation is not the same as being able to compute the inverse. 

For density estimation with normalizing flows, we are interested maximizing the log-likelihood of the data:
\begin{align}
\log p(\mb{x})= \log p_{0}(f^{-1}(\mb{x})) + \log \left| \det \left( \frac{\partial f^{-1}(\mb{x})}{\partial \mb{x}} \right) \right|.
\end{align}
Thus, the goal is to transform a complicated data distribution back to a simple distribution. In general, both directions of an invertible transformations need not be tractable. Hence, methods developed for density estimation are generally not directly applicable to variational inference.

Non-linear independent component estimation (NICE, \cite{Dinh2014}) and the related Real NVP \citep{DinSohBen2016}, and Masked Autoregressive Flow (MAF, \cite{2017_Papamakarios_NIPS}) are recent examples of normalizing flows for density estimation. 

In NICE, each transformation splits the variables into two disjoint subsets $\mb{z}_A, \mb{z}_B$. One of the subsets is transformed as $\mb{z}_A'= \mb{z}_A + f(\mb{z}_B)$, while $\mb{z}_B$ is left unchanged. In the next transformation a different subset of variables is transformed. This results in a transformation which is trivially invertible and has a tractable Jacobian. Real NVP uses the same fundamental idea. Appealingly, because of the tractable inverse, NICE and real NVP can generate data and estimate density with one forward pass. However due to fact that only a subset of variables is updated in each transformation many transformations are needed in practice. \cite{RezMoh2015} compared NICE to planar flows in the context of variational inference and found that planar flows empirically perform better. 

Finally, \cite{2017_Papamakarios_NIPS} showed that fitting an MAF can be seen as fitting an implicit IAF from the data distribution to the base distribution. However, generating data from an MAF density model requires $D$ passes, making it unappealing for variational inference. 

\section{NUMBER OF PARAMETERS} 

Here, we briefly compare the number of parameters needed by planar flows, IAF and the three Sylvester normalizing flows. We denote the size of the stochastic variables $z$ with $D$, and the number of output units of the inference network with $E$.

Planar flows use amortized parameters $\mb u, \mb w \in \mathbb R^D$ and $b\in \mathbb{R}$ for each flow transformation. Therefore, the number of parameters related to $K$ flow transformations is equal to $2 EDK + EK$. 

For the implementation of IAF as described in Section \ref{sec:experiments},  the inference network needs to produce a context of size $C$, where $C$ denotes the width of the MADE layers. The total number of flow related learnable parameters then comes down to $EC + K \times (C^2 + 3CD)$.

In the case of Orthogonal Sylvester flows with a bottleneck of size $M$, we require $ KE \times(MD + 2M^2 + M)$ parameters. For Householder Sylvester flows with $H$ Householder reflections per flow transformation, $KE\times(HD + 2D^2 + D)$ parameters are needed. Finally, for triangular Sylvester flows $KE \times(2D^2 + D)$ parameters require optimization.

Planar flows require the smallest number of parameters but generally result in worse results. IAFs on the other hand require a number of parameters that is quadratic in the width of the MADE layers. For good results this has to be quite large. In contrast, for SNFs the number of parameters is quadratic in the dimension of the latent space and while large, this can still be amortized.

\section{EXPERIMENTS}
\label{sec:experiments}

We perform empirical studies of the performance of Sylvester flows on four datasets: statically binarized MNIST, Freyfaces, Omniglot and Caltech 101 Silhouettes. The baseline model is a plain VAE with a fully factorized Gaussian distribution. We furthermore compare against planar flows and Inverse Autoregressive Flows of different sizes.

We use annealing to optimize the lower bound, where the prefactor of the KL divergence is linearly increased from 0 to 1 during 100 epochs as suggested by \cite{Bowman2015} and \cite{Sonderby2016}. A learning rate of $0.0005$ was used in all experiments. In order to obtain estimates for the negative log likelihood we used importance sampling (as proposed in \citep{RMW:14}). Unless otherwise stated, 5000 importance samples were used.

In order to assess the performance of the different flows properly, we use the same base encoder and decoder architecture for all models.
We use gated convolutions and transposed convolutions as base layers for the encoder and decoder architecture respectively. The inference network consists of several gated convolution layers that produce a hidden unit vector. After being flattened, these hidden units act as an input to two fully connected layers that predict the mean and variance of $\mb z^0$. 

For planar and Sylvester flows, the flattened hidden units are passed to a separate linear layer that output the amortized flow parameters. For IAF, the flattened hidden units are also passed to a linear layer to produce the context vector $\mb h_{\mathrm{context}}(\mb x)$.
For details of the architecture see Section \ref{sec:app_architecture} of the appendix. In all models the latent space is of dimension $64$.

\begin{figure}[ht]
%  \vskip 0.2in
\begin{center}
\centerline{\includegraphics[width=1.05\columnwidth]{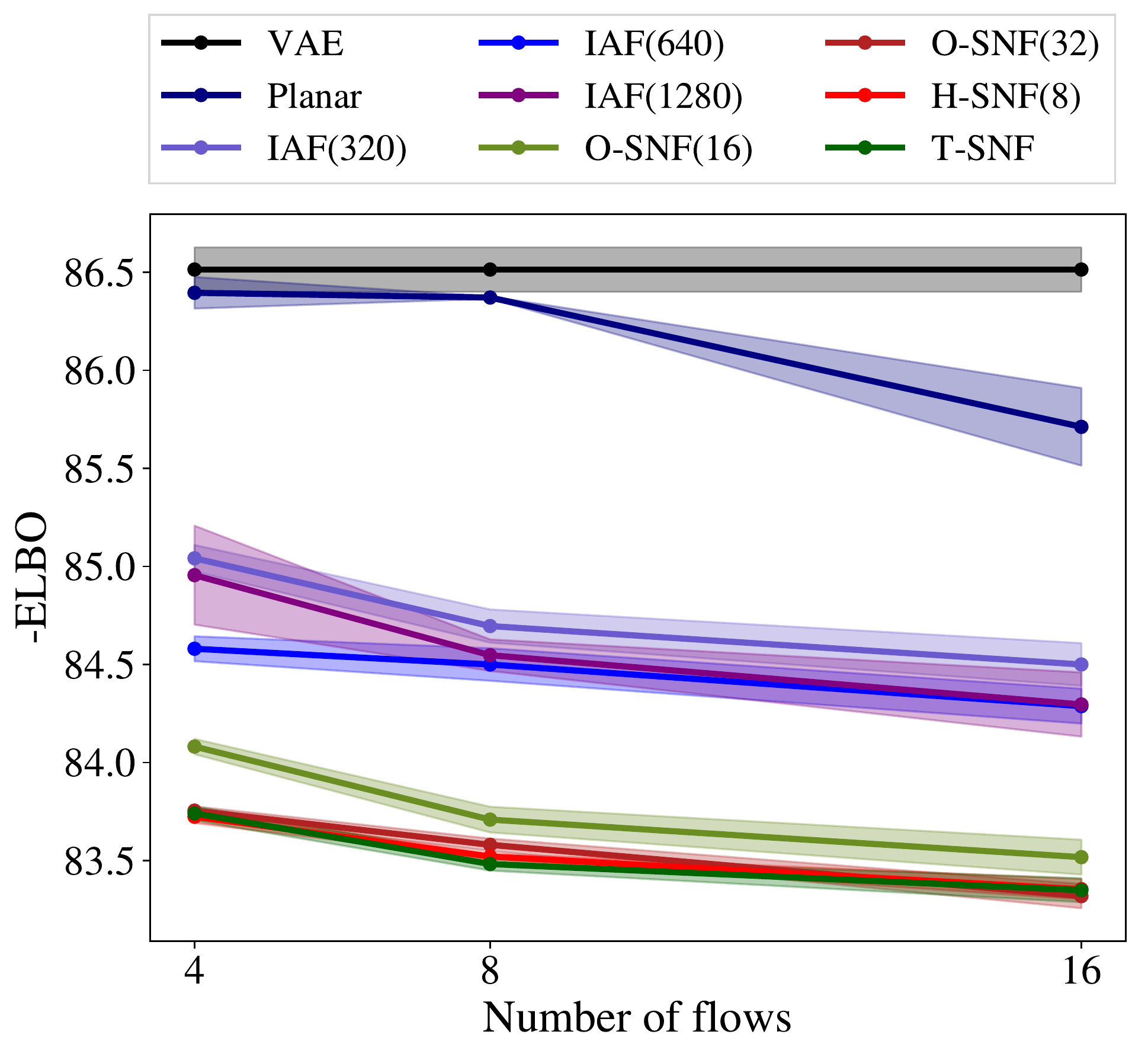}}
\vskip -3mm
  \caption{The negative evidence lower bound for static MNIST. The results for H-SNF with 4 reflections per orthogonal matrix are left out for clarity, as they are very similar to the results with 8 reflections. Each model is evaluated 3 times. The shaded areas indicate $\pm$ one standard deviation.}
  \label{fig:results_mnist}
  \end{center}
\vskip -0.2in
 \end{figure}

We use the following implementation for each IAF transformation\footnote{This implementation is based on the open source code for IAF available at \url{https://github.com/openai/iaf}}: one IAF transformation first applies one MADE Layer (denoted as MaskedLinear) followed by a nonlinearity to the input $z$, upscaling it to a hidden variable of size $M$. At this point the context vector $\mb h_{\mathrm{context}}(\mb x)$ is added to the hidden units, after which two more masked layers are applied to produce the mean and scale of the IAF transformation:
\begin{align}
&\mb h_z \leftarrow \mathrm{ELU}(\mathrm{MaskedLinear}(\mb z)) \notag \\
&\mb h \leftarrow \mb h_z + \mb h_{\mathrm{context}}(\mb x) \notag\\
&\mb h \leftarrow \mathrm{ELU}(\mathrm{MaskedLinear}(\mb h)) \notag \\
&\bs \mu \leftarrow \mathrm{MaskedLinear}(\mb h), \quad\mb s \leftarrow \mathrm{MaskedLinear}(\mb h) \notag \\
& \mb z' \leftarrow \sigma(\mb s) \odot \mb z + (1-\sigma(\mb s)) \odot \bs \mu.
\label{eq:iaf_setup}
\end{align}
Here, $\sigma(\;)$ denotes the sigmoid activation function. In \cite{KinSalJoz2016} it was mentioned that the gated form of IAF in Eq. \eqref{eq:iaf_setup} is more stable than the form of Eq. \eqref{eq:iaf}. Note that the size of $\mb h_{\mathrm{context}} (\mb x)$ scales with the width of the MADE layers $C$. 

\begin{table}[htb]
\centering
\caption{Negative log-likelihood and free energy (negative evidence lower bound) for static MNIST. Numbers are produced with 3 runs per model with different random initializations. Standard deviations over the 3 different runs are also shown.
\vspace{0.3cm}}
\begin{tabular}{l r r}
\toprule
\textbf{Model} & \textbf{-ELBO} & \textbf{NLL} \\
[0.05em]\midrule \\[-0.8em]
VAE & $86.55 \pm 0.06$ & $82.14 \pm 0.07$ \\
Planar & $86.06 \pm 0.31$ & $81.91 \pm 0.22$\\
IAF & $84.20 \pm 0.17$ & $80.79 \pm 0.12$ \\ \hdashline
O-SNF & $\mathbf{83.32 \pm 0.06}$ & $\mathbf{80.22 \pm 0.03}$ \\
H-SNF & $83.40 \pm 0.01$ & $80.29 \pm 0.02$\\
T-SNF & $83.40 \pm 0.10$ & $80.28 \pm 0.06$\\ \bottomrule
\end{tabular}
\label{tab:NLL_mnist} 
\end{table}

\begin{table*}[ht]
\centering
\caption{Results for Freyfaces, Omniglot and Caltech 101 Silhouettes datasets. For the Freyfaces dataset the results are reported in bits per dim. For the other datasets the results are reported in nats. For each flow model 16 flows are used. For IAF a MADE width of 1280 was used, and for O-SNF flow a bottleneck of $M=32$ was used. For H-SNF $8$ householder reflections were used to construct orthogonal matrices. For all datasets 3 runs per model were performed.}
\vspace*{0.3cm}
\begin{tabular}{l  r r r r r r}
\toprule
\multirow{2}{*}{\bfseries{Model}} & 
 \multicolumn{2}{c}{\bfseries Freyfaces} &
 \multicolumn{2}{c}{\bfseries Omniglot} &
 \multicolumn{2}{c}{\bfseries Caltech 101}\\ 
& \multicolumn{1}{c}{-ELBO} & \multicolumn{1}{c}{NLL} & \multicolumn{1}{c}{-ELBO} & \multicolumn{1}{c}{NLL} & \multicolumn{1}{c}{-ELBO} &\multicolumn{1}{c}{NLL} \\\cmidrule(lr){1-7}
% [0.05em]\midrule 
VAE & $4.53 \pm 0.02$ & $4.40 \pm 0.03$ & $104.28 \pm 0.39$ & $97.25 \pm 0.23$ 
& $110.80 \pm 0.46$ & $99.62 \pm 0.74$  \\
Planar & $\mathbf{4.40 \pm 0.06}$ & $\mathbf{4.31 \pm 0.06}$ & $102.65 \pm 0.42$ & $96.04 \pm 0.28$ 
& $109.66 \pm 0.42$ & $98.53 \pm 0.68$\\
IAF & $4.47 \pm 0.05$ & $4.38 \pm 0.04$ & $102.41 \pm 0.04$ & $ 96.08 \pm 0.16$ & 
$111.58 \pm 0.38$ & $ 99.92 \pm 0.30$\\\hdashline
O-SNF & $4.51 \pm 0.04$ & $4.39 \pm 0.05$ & $99.00 \pm 0.29$ & $93.82 \pm 0.21$
& $106.08 \pm 0.39$ & $94.61 \pm 0.83$\\
H-SNF & $4.46 \pm 0.05$ & $4.35 \pm 0.05$& $\mathbf{99.00 \pm 0.04}$ & $\mathbf{93.77 \pm 0.03}$ & $\mathbf{104.62 \pm 0.29}$ & $\mathbf{93.82 \pm 0.62}$\\
T-SNF & $4.45 \pm 0.04$ & $4.35 \pm 0.04$ & $99.33 \pm 0.23$ & $93.97 \pm 0.13$
& $105.29 \pm 0.64$ & $94.92 \pm 0.73$\\
\bottomrule
\end{tabular}
\label{tab:results_freyfaces} 
\end{table*}

\subsection{MNIST}
 
Figure \ref{fig:results_mnist} shows the dependence of the negative evidence lower bound (or free energy) on the number of flows and the type of flow for static MNIST. The exact numbers corresponding to the figure are shown in Section \ref{sec:app_mnist} in the appendix. 

For all models the performance improves as a functions of the number of flows. For 4 flows the difference between the baseline VAE and planar flows is very small. However, planar flows clearly benefit from more flow transformations.

For IAF three different widths of the MADE layers were used: $C=320$, 640 and 1280. Surprisingly, for 4 flows the widest IAF with 1280 hidden units is outperformed by an IAF with 640 hidden units in the MADE layers. We expect this to be due to the fact that this model has more parameters and can therefore be harder to train, as indicated by the larger standard deviation for this model.

All three Sylvester flows outperform IAF and planar flows. For Orthogonal Sylvester flows, we show results for $M=16$ and $M=32$ orthogonal vectors per orthogonal matrix, thus corresponding to bottlenecks of size 16 and 32 respectively for a latent space of size $D=64$. Clearly, a larger bottleneck improves performance. 
For Householder Sylvester flows we experimented with $H=4$ and $H=8$ Householder reflections per orthogonal matrix. Since the results were nearly indistinguishable between these two variants, we have left out the curve for $H=4$ to avoid clutter. O-SNF with $M=32$, H-SNF and T-SNF seem to perform on par. 

In Table \ref{tab:NLL_mnist}, the negative evidence lower bound and the estimated negative log-likelihood are shown for the baseline VAE, together with all flow models for 16 flows. The reported result for IAF is for a MADE width of 1280. The O-SNF model has a bottleneck of $M=32$, and H-SNF contains 8 Householder reflections per orthogonal matrix. Again, all Sylvester flows outperform planar flows and IAF, both in terms of the free energy and the negative log-likelihood. 

As discussed in Section \ref{sec:related_work}, T-SNF is closely related to mean-only IAF, but with the MADE parameters produced by a hypernetwork that depends on the input data $\mb x$. The fact that T-SNF outperforms IAF indicates that having data-dependent flow parameters directly leads to a more flexible transformation compared to taking a very wide MADE with a data-dependent context as an additional input.
 
\subsection{FREYFACES, OMNIGLOT AND CALTECH 101 SILHOUETTES}

We further assess the performance of the different models on Freyfaces, Omniglot and Caltech 101 Silhouettes. The results are shown in Table \ref{tab:results_freyfaces}. The model settings are the same\footnote{For Caltech 101 Silhouettes we used 2000 importance samples for the estimation of the negative log-likelihood.}
 as those used for Table \ref{tab:NLL_mnist}.
 %Since Freyfaces is a much smaller dataset, we observed overfitting with the convolutional architecture, especially when flows were added, irrespective of the type of flow. Therefore, we used a fully connected architecture for this smaller dataset.
 
Freyfaces is a very small dataset of around 2000 faces. All normalizing flows increase the performance, with planar flows yielding the best result, closely followed by Triangular and Householder Sylvester flows. We expect planar flows to perform the best in this case since it is the least sensitive to overfitting.

% Even though adding normalizing flows increases the performance, the difference is still marginal, with no clear winner. 

For Omniglot and Caltech 101 Silhouettes the results are clearer, with  the Sylvester normalizing flows family resulting in the best performance. Both H-SNF and T-SNF perform better than O-SNF. This could be attributed to the fact that O-SNF has a bottleneck of $M=32$ for a latent space size of $D=64$. The IAF scores for Caltech 101 are surprisingly bad. We expect this could be the case due to the large number of parameters that need to be trained for IAF(1280). Therefore we also evaluated the result for MADEs of width 320 for 16 flows. The resulting free energy and estimated negative log-likelihood are $111.23 \pm 0.45$ and $99.74 \pm 0.28$ respectively, only slightly improving on the results of 1280 wide IAFs. 

\section{CONCLUSION}

We present a new family of normalizing flows: Sylvester normalizing flows. These flows generalize planar flows, while maintaining an efficiently computable Jacobian determinant through the use of Sylvester's determinant identity. We ensure invertibility of the flows through the use of orthogonal and triangular parameter matrices. Three variants of Sylvester flows are investigated. First, orthogonal Sylvester flows use an iterative procedure to maintain orthogonality of parameter matrices. Second, Householder Sylvester flows use Householder reflections to construct orthogonal matrices. Third, triangular Sylvester flows alternate between fixed permutation and identity matrices for the orthogonal matrices. We show that the triangular Sylvester flows are closely related to mean-only IAF, with data-dependent MADE parameters. While performing comparably with planar flows and IAF for the Freyfaces dataset, our proposed family of flows improve significantly upon planar flows and IAF on the three other datasets. 

\subsubsection*{Acknowledgements}
We would like to thank Christos Louizos for helping with the implementation of inverse autoregressive flows, and Diederik Kingma for fruitful discussions.
LH is funded by the UK EPSRC OxWaSP CDT through grant EP/L016710/1. JMT is funded by the European Commission within the MSC-IF (Grant No. 702666). RvdB is funded by SAP SE.

%\bibliography{literature}
%\bibliographystyle{plainnat}

\clearpage
\appendix

\section{Architecture}
\label{sec:app_architecture}
In the experiments we used convolutional layers for both the encoder and the decoder. Moreover, we used the gated activation function for convolutional layers:
\begin{equation*}
\mathbf{h}_{l} = ( \mathbf{W}_{l} * \mathbf{h}_{l-1} + \mathbf{b}_{l} ) \odot \sigma ( \mathbf{V}_{l} * \mathbf{h}_{l-1} + \mathbf{c}_{l} ) ,
\end{equation*}
where $\mathbf{h}_{l-1}$ and $\mathbf{h}_{l}$ are inputs and outputs of the $l$-th layer, respectively, $\mathbf{W}_{l}, \mathbf{V}_{l}$ are weights of the $l$-th layer, $\mathbf{b}_{l}, \mathbf{c}_{l}$ denote biases, $*$ is the convolution operator, and $\sigma(\cdot)$ is the sigmoid activation function.

We used the following architecture of the encoder ($\mathrm{k}$ is a kernel size, $\mathrm{p}$ is a padding size, and $\mathrm{s}$ is a stride size):\footnote{We use a PyTorch convention of defining convolutional layers.}
\begin{align*}
& \mathrm{Conv(in=1, out=32, k=5, p=2, s=1)} \notag \\
& \mathrm{Conv(in=32, out=32, k=5, p=2, s=2)} \notag \\
& \mathrm{Conv(in=32, out=64, k=5, p=2, s=1)} \notag \\
& \mathrm{Conv(in=64, out=64, k=5, p=2, s=2)} \notag \\
& \mathrm{Conv(in=64, out=64, k=5, p=2, s=1)} \notag \\
& \mathrm{Conv(in=64, out=64, k=5, p=2, s=1)} \notag \\
& \mathrm{Conv(in=64, out=256, k=7, p=0, s=1)} \notag 
\end{align*}
Notice the last layer acts as a fully-connected layer. Eventually, fully-connected linear layers were used to parameterized diagonal Gaussian distribution and amortized parameters of a flow.

The decoder mirrors the structure of the encoder with transposed convolutional layers ($\mathrm{op}$ is an outer padding):
\begin{align*}
& \mathrm{ConvT(in=64, out=64, k=7, p=0, s=1)} \notag \\
& \mathrm{ConvT(in=64, out=64, k=5, p=2, s=1)} \notag \\
& \mathrm{ConvT(in=64, out=32, k=5, p=2, s=2, op=1)} \notag \\
& \mathrm{ConvT(in=32, out=32, k=5, p=2, s=1)} \notag \\
& \mathrm{ConvT(in=32, out=32, k=5, p=2, s=2, op=1)} \notag \\
& \mathrm{ConvT(in=32, out=32, k=5, p=2, s=1)} \notag \\
& \mathrm{ConvT(in=32, out=1, k=1, p=0, s=1)} \notag 
\end{align*}

%SECTION
\subsection{Description of datasets}

In the experimetns we used the following four image datasets: static MNIST\footnote{\url{http://yann.lecun.com/exdb/mnist/}}, OMNIGLOT\footnote{\url{https://github.com/yburda/iwae/blob/master/datasets/OMNIGLOT/chardata.mat}.}, Caltech 101 Silhouettes\footnote{\url{https://people.cs.umass.edu/~marlin/data/caltech101_silhouettes_28_split1.mat}.}, and Frey Faces\footnote{\url{http://www.cs.nyu.edu/~roweis/data/frey_rawface.mat}}. Frey Faces contains images of size $28 \times 20$ and all other datasets contain $28 \times 28$ images.

MNIST consists of hand-written digits split into 60,000 training datapoints and 10,000 test sample points. In order to perform model selection we put aside 10,000 images from the training set.

OMNIGLOT is a dataset containing 1,623 hand-written characters from 50 various alphabets. Each character is represented by about 20 images that makes the problem very challenging. The dataset is split into 24,345 training datapoints and 8,070 test images. We randomly pick 1,345 training examples for validation. During training we applied dynamic binarization of data similarly to dynamic MNIST.

Caltech 101 Silhouettes contains images representing silhouettes of 101 object classes. Each image is a filled, black polygon of an object on a white background. There are 4,100 training images, 2,264 validation datapoints and 2,307 test examples. The dataset is characterized by a small training sample size and many classes that makes the learning problem ambitious.

Frey Faces is a dataset of faces of a one person with different emotional expressions. The dataset consists of nearly 2,000 gray-scaled images. We randomly split them into 1,565 training images, $200$ validation images and $200$ test images. We repeated the experiment $3$ times.

\section{MNIST experiments}
\label{sec:app_mnist}
The exact numbers for the evidence lower bound as shown in Fig. \ref{fig:results_mnist} are listed in Table \ref{tab:results_mnist}.
\begin{table}[h]
\centering
\caption{Negative evidence lower bounds for the test set on MNIST. All results are obtained with stochastic hidden units of size $64$.}
\vskip 3mm
% \resizebox{\columnwidth}{!}{%
\begin{tabular}{l r }
\toprule
\textbf{Model} & \textbf{-ELBO}\\
[0.05em]\midrule 
VAE & $86.51 \pm 0.11$  \\
\hdashline
Planar ($K=4$) & $86.40 \pm 0.08$ \\
Planar ($K=8$) & $86.37 \pm 0.006$ \\
Planar ($K=16$) & $85.71 \pm 0.20$ \\
\hdashline
IAF ($W=320$, $K=4$) & $85.04 \pm 0.07$ \\
IAF ($W=320$, $K=8$) & $84.70 \pm 0.08$ \\
IAF ($W=320$, $K=16$) & $84.50 \pm 0.11$ \\
\hdashline
IAF ($W=640$, $K=4$) & $84.58 \pm 0.06$ \\
IAF ($W=640$, $K=8$) & $84.50 \pm 0.08$ \\
IAF ($W=640$, $K=16$) & $84.29 \pm 0.09$ \\
\hdashline
IAF ($W=1280$, $K=4$) & $84.96 \pm 0.25$ \\
IAF ($W=1280$, $K=8$) & $84.55 \pm 0.08$ \\
IAF ($W=1280$, $K=16$) & $84.30 \pm 0.16$ \\
\hdashline
O-SNF ($M=16$, $K=4$) & $84.08 \pm 0.04$ \\
O-SNF ($M=16$, $K=8$) & $83.71 \pm 0.07$ \\
O-SNF ($M=16$, $K=16$) & $83.52 \pm 0.09$ \\
\hdashline
O-SNF ($M=32$, $K=4$) & $83.76 \pm 0.02$ \\
O-SNF ($M=32$, $K=8$) & $83.58 \pm 0.03$ \\
O-SNF ($M=32$, $K=16$) & $83.32 \pm 0.06$ \\
\hdashline
H-SNF ($K=4$, $H=4$) & $83.73 \pm 0.05$ \\
H-SNF ($K=8$, $H=4$) & $83.49 \pm 0.08$ \\
H-SNF ($K=16$, $H=4$) & $83.36 \pm 0.04$ \\
\hdashline
H-SNF ($K=4$, $H=8$) & $83.72 \pm 0.03$ \\
H-SNF ($K=8$, $H=8$) & $83.52 \pm 0.01$ \\
H-SNF ($K=16$, $H=8$) & $83.35 \pm 0.05$ \\
\hdashline
T-SNF ($K=4$) & $83.74 \pm 0.04$ \\
T-SNF ($K=8$) & $83.48 \pm 0.03$ \\
T-SNF ($K=16$) & $83.35 \pm 0.06$ \\
\bottomrule
\end{tabular}
% }
\label{tab:results_mnist} 
\end{table}

\end{document}